\theoremstyle{definition}
\newtheorem{thm}{Theorem}
\newtheorem{pro}[thm]{Property}
\newtheorem{defn}{Definition}%[section]
\newcommand{\hang}[1]{#1}
\renewcommand\arraystretch{0.6}
\begin{document}

\title{Lifelong Multi-Agent Path Finding \\ for Online Pickup and Delivery
  Tasks\thanks{\small Our research was supported by NSF under grant numbers
    1409987 and 1319966. The views and conclusions contained in this document
    are those of the authors and should not be interpreted as representing the
    official policies, either expressed or implied, of the sponsoring
    organizations, agencies or the U.S. government.}}

\numberofauthors{4}

\author{
\alignauthor Hang Ma\\
  \affaddr{University of Southern California}\\
  \email{hangma@usc.edu}
\alignauthor Jiaoyang Li\\
  \affaddr{Tsinghua University}\\
  \email{lijiaoyang13@mails.tsinghua.edu.cn}
  \and
\alignauthor  T.~K.~Satish Kumar\\
  \affaddr{University of Southern California}\\
  \email{tkskwork@gmail.com}
\alignauthor  Sven Koenig\\
  \affaddr{University of Southern California}\\
  \email{skoenig@usc.edu}
}

\maketitle

\begin{abstract}
The multi-agent path-finding (MAPF) problem has recently received a lot of
attention. However, it does not capture important characteristics of many
real-world domains, such as automated warehouses, where agents are constantly
engaged with new tasks. In this paper, we therefore study a lifelong version
of the MAPF problem, called the multi-agent pickup and delivery (MAPD)
problem. In the MAPD problem, agents have to attend to a stream of delivery
tasks in an online setting. One agent has to be assigned to each delivery
task. This agent has to first move to a given pickup location and then to a
given delivery location while avoiding collisions with other agents. We
present two decoupled MAPD algorithms, Token Passing (TP) and Token Passing
with Task Swaps (TPTS). Theoretically, we show that they solve all well-formed
MAPD instances, a realistic subclass of MAPD instances.  Experimentally, we
compare them against a centralized strawman MAPD algorithm without this
guarantee in a simulated warehouse system. TP can easily be extended to a
fully distributed MAPD algorithm and is the best choice when real-time
computation is of primary concern since it remains efficient for MAPD
instances with hundreds of agents and tasks. TPTS requires limited
communication among agents and balances well between TP and the centralized
MAPD algorithm.
\end{abstract}

% Note that the category section should be completed after reference to the ACM Computing Classification Scheme available at
% http://www.acm.org/about/class/1998/.

%\category{I.2.8}{Artificial Intelligence}{Problem Solving, Control Methods, and Search}[graph and tree search strategies, heuristic methods]
%\category{I.2.11}{Artificial Intelligence}{Distributed Artificial Intelligence}[intelligent agents, multiagent systems]

%A category including the fourth, optional field follows...
%\category{D.2.8}{Software Engineering}{Metrics}[complexity measures, performance measures]

%General terms should be selected from the following 16 terms: Algorithms, Management, Measurement, Documentation, Performance, Design, Economics, Reliability, Experimentation, Security, Human Factors, Standardization, Languages, Theory, Legal Aspects, Verification.

%\terms{Algorithms, Performance, Experimentation}

%Keywords are your own choice of terms you would like the paper to be indexed by.

\keywords{agent coordination; multi-agent path finding; path planning; pickup and delivery tasks; task assignment}

\section{Introduction}

Many real-world applications of multi-agent systems require agents to operate
in known common environments. The agents are constantly engaged with new tasks
and have to navigate between locations where the tasks need to be executed.
Examples include aircraft-towing vehicles~\cite{airporttug16}, warehouse
robots~\cite{kiva}, office robots~\cite{DBLP:conf/ijcai/VelosoBCR15}, and game
characters in video games~\cite{WHCA}. In the near future, for instance,
aircraft-towing vehicles might navigate autonomously to aircraft and tow them
from the runways to their gates so as to reduce pollution, energy consumption,
congestion, and human workload. Today, warehouse robots already navigate
autonomously to inventory pods and move them from their storage locations to
packing stations.

Past research efforts have concentrated mostly on a ``one-shot'' version of
this problem, called the multi-agent path-finding (MAPF) problem, which has
been studied in artificial intelligence, robotics, and operations research. In
the MAPF problem, each agent has to move from its current location to its
destination while avoiding collisions with other agents in a known common
environment. The number of agents is the same as the number of destinations,
and the MAPF task ends once all agents reach their destinations. Therefore,
the MAPF problem does not capture important characteristics of many real-world
domains, such as automated warehouses, where agents are constantly engaged
with new tasks.

In this paper, we therefore study a ``lifelong'' version of the MAPF problem,
called the multi-agent pickup and delivery (MAPD) problem. In the MAPD
problem, agents have to attend to a stream of delivery tasks in a known common
environment that is modeled as an undirected graph. Tasks can enter the system
at any time and are modeled as exogenous events that are characterized by a
pickup location and a delivery location each. An agent that is currently not
executing any task can be assigned to an unexecuted task. In order to execute
the task, the agent has to first move from its current location to the pickup
location and then to the delivery location of the task while avoiding
collisions with other agents. We first formalize the MAPD problem and then
present two decoupled MAPD algorithms, Token Passing (TP) and Token Passing
with Task Swaps (TPTS), both of which are based on existing MAPF
algorithms. Theoretically, we show that they solve all well-formed MAPD
instances \cite{CapVK15}, a realistic subclass of MAPD instances.
Experimentally, we compare them against a centralized strawman MAPD algorithm
without this guarantee in a simulated warehouse system.

\section{Background and Related Work}

The MAPD problem requires both the assignment of agents to tasks in an online
and lifelong setting and the planning of collision-free paths. In a lifelong
setting, agents have to attend to a stream of tasks. Therefore, agents cannot
rest in their destinations after they finish executing tasks. In an online
setting, tasks can enter the system at any time. Therefore, assigning agents
to tasks and path planning cannot be done in advance but rather need to be
done during execution in real-time.

The decentralized assignment of agents to more than one task each has been
studied before in isolation \cite{Tovey2005,ZhengIJCAI,AAAI15-MacAlpine}. The
decentralized planning of collision-free paths has also been studied before in
isolation, including with reactive approaches~\cite{ORCA} and prioritized
approaches \cite{ErdmannL87}, but these approaches can result in
deadlocks. The planning of collision-free paths has also been studied in the
context of the MAPF problem, which is a one-shot (as opposed to a lifelong)
version of the MAPD problem.  It is NP-hard to solve optimally for minimizing
flowtime (the sum of the number of timesteps required by all agents to reach
their destinations and stop moving) and NP-hard to approximate within any
constant factor less than 4/3 for minimizing makespan (the timestep when all
agents have reached their destinations and stop moving)~\cite{MaAAAI16}. It
can be solved via reductions to Boolean Satisfiability~\cite{Surynek15},
Integer Linear Programming~\cite{YuLav13ICRA}, and Answer Set
Programming~\cite{erdem2013general}. Optimal dedicated MAPF algorithms include
Independence Detection with Operator Decomposition~\cite{ODA11}, Enhanced
Partial Expansion A*~\cite{EPEJAIR}, Increasing Cost Tree
Search~\cite{DBLP:journals/ai/SharonSGF13}, M*~\cite{wagner15}, and
Conflict-Based
Search~\cite{DBLP:journals/ai/SharonSFS15,ICBS,CohenUK16}. Suboptimal
dedicated MAPF algorithms include Windowed-Hierarchical Cooperative
A*~\cite{WHCA,WHCA06}, Push and Swap/Rotate~\cite{PushAndSwap,PushAndRotate},
TASS~\cite{KhorshidHS11}, BIBOX~\cite{Surynek09}, and MAPP~\cite{WangB11}. The
MAPF problem has recently been generalized to more clearly resemble real-world
settings \cite{HoenigICAPS16,MaWOMPF16,HoenigIROS16,MaAAAI17,MaAAMAS16} but
these versions are still one-shot.

\section{Problem Definition}

In this section, we first formalize the MAPD problem and then define
well-formed MAPD instances.

\subsection{MAPD Problem}

An instance of the MAPD problem consists of $m$ agents $A = \{a_1, a_2 \ldots
a_m\}$ and an undirected connected graph $G = (V,E)$ whose vertices $V$
correspond to locations and whose edges $E$ correspond to connections between
locations that the agents can move along. Let $l_i(t) \in V$ denote the
location of agent $a_i$ in discrete timestep $t$. Agent $a_i$ starts in its
initial location $l_i(0)$. In each timestep $t$, the agent either stays in its
current location $l_i(t)$ or moves to an adjacent location, that is, $l_i(t+1)
= l_i(t)$ or $(l_i(t), l_i(t+1)) \in E$. Agents need to avoid collisions with
each other: (1) Two agents cannot be in the same location in the same
timestep, that is, for all agents $a_i$ and $a_{i'}$ with $a_i \neq a_{i'}$
and timesteps $t$: $l_i(t) \neq l_{i'}(t)$; and (2) two agents cannot move
along the same edge in opposite directions in the same timestep, that is, for
all agents $a_i$ and $a_{i'}$ with $a_i \neq a_{i'}$ and all timesteps $t$:
$l_i(t) \neq l_{i'}(t+1)$ or $l_{i'}(t) \neq l_i(t+1)$. \hang{A path is a
  sequence of locations with associated timesteps, that is, a mapping from an
  interval of timesteps to locations. Two paths collide iff the two agents
  that move along them collide.}

Consider a task set $\mathcal{T}$ that contains the set of unexecuted
tasks. In each timestep, the system adds all new tasks to the task set. Each
task $\tau_j \in \mathcal{T}$ is characterized by a pickup location $s_j \in
V$ and a delivery location $g_j \in V$. An agent is called free iff it is
currently not executing any task. Otherwise, it is called occupied. A free
agent can be assigned to any task $\tau_j \in \mathcal{T}$. It then has to
move from its current location via the pickup location $s_j$ of the task to
the delivery location $g_j$ of the task. (Any agent that had been assigned to
this task previously no longer has this obligation.) When the agent reaches
the pickup location, it starts to execute the task and the task is removed
from $\mathcal{T}$. When it reaches the delivery location, it finishes
executing the task, which implies that it becomes free again and is no longer
assigned to the task. Note that any free agent can be assigned to any task in
the task set. An agent can be assigned to a different task in the task set
while it is still moving to the pickup location of the task it is currently
assigned to but it has first to finish executing the task after it has reached
its pickup location.  These properties model delivery tasks, where agents can
often be re-tasked before they have picked up a good but have to deliver it
afterward.

The objective is to finish executing each task as quickly as possible.
Consequently, the effectiveness of a MAPD algorithm is evaluated by the
average number of timesteps, called service time, needed to finish executing
each task after it was added to the task set. A MAPD algorithm solves a MAPD
instance iff the resulting service time of all tasks is bounded.

\subsection{Well-Formed MAPD Instances}

\begin{figure}
  \centering
  \includegraphics[width=0.3\columnwidth]{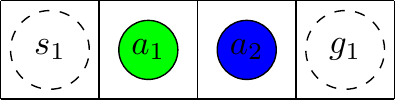}
  \caption{The figure shows a MAPD instance with two free agents $a_1$ and
    $a_2$ and one task $\tau_1$ with pickup location $s_1$ and delivery
    location $g_1$.}
  \label{fig:env}
\end{figure}

Not every MAPD instance is solvable. Figure~\ref{fig:env} shows an example
with two free agents $a_1$ and $a_2$ where neither agent can finish executing
task $\tau_1$ with pickup location $s_1$ and delivery location $g_1$. We now
provide a sufficient condition that makes MAPD instances solvable, namely
being well-formed \cite{CapVK15}. The intuition is that agents should only be
allowed to rest (that is, stay forever) in locations, called endpoints, where
they cannot block other agents. For example, office workspaces are typically
placed in office environments so as not to block routes. The set $V_{ep}$ of
endpoints of a MAPD instance contains all initial locations of agents, all
pickup and delivery locations of tasks, and perhaps additional designated
parking locations. Let $V_{tsk}$ denote the set of all possible pickup and
delivery locations of tasks, called the task endpoints. The set $V_{ep}
\setminus V_{tsk}$ is called the set of non-task endpoints.

\begin{figure}
  \centering
  \includegraphics[width=0.25\columnwidth]{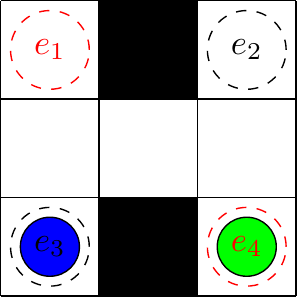}~~
  \includegraphics[width=0.25\columnwidth]{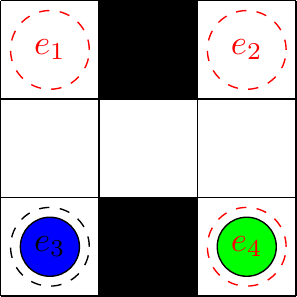}~~
  \includegraphics[width=0.25\columnwidth]{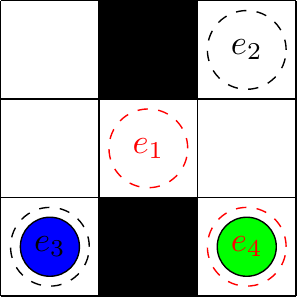}
  \caption{The figure shows three MAPD instances. Black cells are
    blocked. Blue and green circles are the initial locations of agents. Red
    dashed circles are task endpoints. Black dashed circles are non-task
    endpoints.}
  \label{fig:well-formed}
\end{figure}

\begin{defn}\label{defn:well-formed}
  A MAPD instance is well-formed iff a) the number of tasks is finite, b)
  there are no fewer non-task endpoints than the number of agents, and c) for
  any two endpoints, there exists a path between them that traverses no other
  endpoints.
\end{defn}

Well-formed MAPD instances (with at least one task) have at least $m+1$
endpoints. Figure~\ref{fig:well-formed} shows three MAPD instances. The MAPD
instance on the left is well-formed. The MAPD instance in the center is not
well-formed because there are two agents but only one non-task endpoint. The
MAPD instance on the right is not well-formed because, for example, all paths
between endpoints $e_2$ and $e_3$ traverse endpoint $e_1$. We design two
decoupled MAPD algorithms in the following that both solve all well-formed
MAPD instances (even though they might not execute all tasks in case the
number of tasks is infinite).

\section{Decoupled MAPD Algorithms}

In this section, we present first a simple decoupled MAPD algorithm, called
Token Passing (TP), and then an improved version, called Token Passing with
Task Swaps (TPTS), that is more effective. Decoupled MAPD algorithms are those
where each agent assigns itself to tasks and computes its own collision-free
paths given some global information.

\subsection{Token Passing (TP)}

Token Passing (TP) is based on an idea similar to Cooperative A*~\cite{WHCA},
where agents plan their paths one after the other. Its task set contains all
tasks that have no agents assigned to them. We describe a version of TP that
uses token passing and can thus easily be extended to a fully distributed MAPD
algorithm. The token is a synchronized shared block of memory that contains
the current paths of all agents, task set, and agent assignments. \hang{All
  MAPD algorithms in this paper, including TP, always assume that an agent
  rests (that is, stays forever) in the last location of its path in the token
  when it reaches the end of it.} Token passing has previously been used to
develop COBRA \cite{CapVK15}, which is a MAPF-like algorithm that does not
take into account that pickup or delivery locations of tasks can be occupied
by agents not executing them and can thus result in deadlocks.

\begin{algorithm}[t]
\scriptsize
\renewcommand\arraystretch{0.5}
\caption{Token Passing (TP)}
\label{alg:TP}
/* system executes now */\;
Initialize \emph{token} with the (trivial) path $[loc(a_i)]$ for each agent $a_i$\;
\While{true}
{
    Add all new tasks, if any, to the task set $\mathcal{T}$\;
    \While{\textnormal{agent $a_i$ exists that requests \emph{token}}}
    {
      /* system sends \emph{token} to $a_i$ - $a_i$ executes now */\;
      $\mathcal{T'} \gets \{\tau_j \in \mathcal{T}| $no other path in \emph{token} ends in $s_j$ or $g_j$$\}$\;
      \If{$\mathcal{T'}\neq \emptyset$}
         {
           $\tau \gets \arg \min_{\tau_j \in \mathcal{T'}} h(loc(a_i), s_j)$\;
           Assign $a_i$ to $\tau$\;
           Remove $\tau$ from $\mathcal{T}$\;
           Update $a_i$'s path in \emph{token} with Path1($a_i$, $\tau$, \emph{token})\;
         }
         \ElseIf{\textnormal{no task $\tau_j \in \mathcal{T}$ exists with $g_j = loc(a_i)$}}
                {
                  Update $a_i$'s path in \emph{token} with the path $[loc(a_i)]$\;
                }
                \Else
                    {
                      Update $a_i$'s path in \emph{token} with Path2($a_i$, \emph{token})\;
                    }
                    /* $a_i$ returns \emph{token} to system - system executes now */\;
    }
    All agents move along their paths in \emph{token} for one timestep\;
    /* system advances to the next timestep */\;
}
\end{algorithm}

Algorithm~\ref{alg:TP} shows the pseudo-code of TP, where $loc(a_i)$ denotes
the current location of agent $a_i$. Agent $a_i$ finds all paths via A*
searches in a state space whose states are pairs of locations and timesteps. A
directed edge exists from state $(l,t)$ to state $(l',t+1)$ iff $l = l'$ or
$(l,l') \in E$. State $(l,t)$ is removed from the state space iff $a_i$ being
in location $l$ at timestep $t$ results in it colliding with other agents that
move along their paths in the token. Similarly, the edge from state $(l,t)$ to
state $(l',t+1)$ is removed from the state space iff $a_i$ moving from
location $l$ to location $l'$ at timestep $t$ results in it colliding with
other agents that move along their paths in the token. Since cost-minimal paths
need to be found only to endpoints, the path costs from all locations to all
endpoints are computed in a preprocessing phase and then used as h-values for
all A* searches. TP works as follows: The system initializes the token with
the trivial paths where all agents rest in their initial locations [Line
  2]. In each timestep, the system adds all new tasks, if any, to the task set
[Line 4]. Any agent that has reached the end of its path in the token requests
the token once per timestep. (It turns out that one can easily drop the
condition and let any free agent request the token once per timestep for both
decoupled MAPD algorithms in this paper.) The system then sends the token to
each agent that requests it, one after the other [Lines 5-6]. The agent with
the token chooses a task from the task set such that no path of other agents
in the token ends in the pickup or delivery location of the task [Line 7].

\begin{itemize}

\item If there is at least one such task, then the agent assigns itself to the
  one with the smallest h-value from its current location to the pickup
  location of the task and removes this task from the task set [Lines
    9-11]. The agent then calls function Path1 to update its path in the token
  with a cost-minimal path that a) moves from its current location via the
  pickup location of the task to the delivery location of the task and b) does
  not collide with the paths of other agents stored in the token [Line 12].

\item If there is no such task, then the agent does not assign itself to a
  task in the current timestep. If the agent is not in the delivery location
  of a task in the task set, then it updates its path in the token with the
  trivial path where it rests in its current location [Line 14]. Otherwise, to
  avoid deadlocks, it calls function Path2 to update its path in the token
  with a cost-minimal path that a) moves from its current location to an
  endpoint such that the delivery locations of all tasks in the task set are
  different from the chosen endpoint and no path of other agents in the token
  ends in the chosen endpoint and b) does not collide with the paths of other
  agents stored in the token [Line 16].

\end{itemize}

Finally, the agent returns the token to the system and moves along its path in
the token [Lines 17-18].

We now prove that the agent is always able to find a path because it finds a
path only when it is at an endpoint and thus has to find only a path from an
endpoint to an endpoint.

\begin{pro}
\label{p1}
Function Path1 returns a path successfully for well-formed MAPD instances.
\end{pro}

\begin{proof}
  We construct a path from the current location $loc(a_i)$ of agent $a_i$
  (which is an endpoint) via the pickup location $s_j$ of task $\tau_j$ to the
  delivery location $g_j$ of task $\tau_j$ that does not collide with the
  paths of other agents stored in the token. Due to
  Definition~\ref{defn:well-formed}, there exists a path from $loc(a_i)$ via
  $s_j$ to $g_j$ that traverses no other endpoints. All paths stored in
  \emph{token} end in endpoints that are different from $loc(a_i)$, $s_j$, and
  $g_j$. Thus, this path does not collide with the paths of the other agents
  if $a_i$ moves along it after all other agents have moved along their paths.
\end{proof}

\begin{pro}
\label{p2}
Function Path2 returns a path successfully for well-formed MAPD instances.
\end{pro}

\begin{proof}
  Due to Definition~\ref{defn:well-formed}, there exist at least $m$ non-task
  endpoints and thus at least one non-task endpoint such that no path of
  agents other than agent $a_i$ in the token ends in the non-task endpoint. Of
  course, the delivery locations of all tasks in the task set are different
  from the non-task endpoint as well. We construct a path from the current
  location $loc(a_i)$ of agent $a_i$ (which is an endpoint) to the chosen
  endpoint that does not collide with the paths of other agents stored in the
  token. Due to Definition~\ref{defn:well-formed}, there exists a path from
  $loc(a_i)$ to the chosen endpoint that traverses no other endpoints. All
  paths stored in \emph{token} end in endpoints that are different from
  $loc(a_i)$ and the chosen endpoint. Thus, this path does not collide with
  the paths of the other agents if $a_i$ moves along it after all other agents
  have moved along their paths.
\end{proof}

\begin{thm} \label{thm1}
All well-formed MAPD instances are solvable, and TP solves them.
\end{thm}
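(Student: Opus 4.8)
The plan is to prove the stronger second claim—that TP solves every well-formed instance—since exhibiting TP as a solver immediately establishes solvability. Because a well-formed instance has only finitely many tasks (condition (a) of Definition~\ref{defn:well-formed}), ``bounded service time'' is equivalent to ``every task is eventually finished'': if each of the finitely many tasks completes after a finite number of timesteps, the maximum over them is a finite bound. So I would reduce the theorem to two liveness claims: (i) every task is eventually assigned to some agent, and (ii) every assigned task is finished within finitely many timesteps.

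First I would set up the invariants that make the algorithm well-defined. I would show by induction over the token updates that, at all times, the paths stored in the token are pairwise collision-free and end in pairwise-distinct endpoints, and that an agent holds the token only while resting at an endpoint. Distinctness is preserved because Path1 moves an agent to a delivery location $g_j$ chosen (via $\mathcal{T}'$) so that no other path ends in $s_j$ or $g_j$, Path2 moves it to an endpoint where no other path ends, and the trivial path keeps it at its already-distinct endpoint; the initial configuration is distinct since agents start in distinct locations. With this invariant, Properties~\ref{p1} and~\ref{p2} guarantee that every call to Path1 or Path2 succeeds, so TP never fails to produce a valid next move. Claim (ii) then follows: once an agent takes Path1 it has a fixed, finite, collision-free path to $g_j$ in the token, it does not request the token again until it arrives, and every later planner avoids its path, so it reaches $g_j$ and finishes the task after a finite number of timesteps.

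The heart of the argument---and the step I expect to be the main obstacle---is claim (i), ruling out that a task is blocked forever. I would argue by contradiction: if some task is never assigned, then since the task set is finite there is a time $T_1$ after which no new task arrives and every task that is ever assigned has already finished, so from then on all agents are free and the remaining tasks $\mathcal{T}_\infty$ stay in $\mathcal{T}$ and are never taken. Consequently, every time a free agent holds the token the set $\mathcal{T}'$ must be empty, for otherwise the agent would assign itself to a task of $\mathcal{T}_\infty$. The key is to show that emptiness of $\mathcal{T}'$ forces an assignment anyway. Using Line~16, any agent sitting on a delivery location $g_j$ of a task in $\mathcal{T}_\infty$ is driven off it by Path2 to a non-task endpoint (and, as in the proof of Property~\ref{p2}, such a path touches no other endpoint, so the agent never returns), so after one round of token passing no path ends in any such $g_j$. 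Then for a task $\tau_j\in\mathcal{T}_\infty$ to remain unassignable, some agent must rest on its pickup location $s_j$; but when that agent next holds the token, its own path ends in $s_j$, by distinctness no other path ends in $s_j$, and $g_j$ is already vacated, so $\tau_j\in\mathcal{T}'$---the agent assigns itself, contradicting that $\mathcal{T}_\infty$ is never taken. The subcase where $s_j$ is itself a delivery location of another task in $\mathcal{T}_\infty$ is easier, since then $s_j$ is vacated too and $\tau_j$ is immediately assignable. This contradiction establishes (i).

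Finally I would assemble the pieces: (i) and (ii) show that every task finishes after finitely many timesteps, and finiteness of the task set turns this into a uniform bound on service time, so TP solves the instance, whence all well-formed instances are solvable. The delicate points to get exactly right are the maintenance of the distinct-endpoint invariant across the several agents that update the token within a single timestep, and the precise timing in the contradiction argument (that delivery locations are vacated by Path2 before, or in the same token-passing round as, the agent on $s_j$ being served); but since only the existence of a contradiction is needed, these ordering details are not essential.
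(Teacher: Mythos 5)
Your proposal is correct and follows essentially the same route as the paper's proof: eventual assignment of every task is guaranteed because Line~16 prevents agents from resting on delivery locations of tasks in the task set (so the condition on Line~8 is eventually satisfied), and execution of each assigned task is guaranteed by Properties~\ref{p1} and~\ref{p2}, with finiteness of the task set turning per-task termination into bounded service time. You simply elaborate what the paper states tersely---adding the distinct-endpoint invariant, the contradiction framing, and the explicit handling of an agent resting on a pickup location $s_j$ (which can take $\tau_j$ itself since Line~7 only excludes \emph{other} paths ending in $s_j$ or $g_j$)---all of which is implicit in the paper's argument.
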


\begin{proof}
  We show that each task is eventually assigned some agent and executed by
  it. Each agent requests the token after a bounded number of timesteps, and
  no agent rests in the delivery location of a task in the task set due to
  Line 16. Thus, the condition on Line 8 becomes eventually satisfied and some
  agent assigns itself to some task on Line 10. The agent is then able to
  execute it due to Properties \ref{p1} and \ref{p2}.
\end{proof}

\subsection{Token Passing with Task Swaps (TPTS)}

TP is simple but can be made more effective. Token Passing with Task Swaps
(TPTS) is similar to TP except that its task set now contains all unexecuted
tasks, rather than only all tasks that have no agents assigned. This means
that an agent with the token can assign itself not only to a task that has no
agent assigned but also to a task that is already assigned another agent
as long as that agent is still moving to the pickup location of the
task. This might be beneficial when the former agent can move to the pickup
location of the task in fewer timesteps than the latter agent. The latter
agent is then no longer assigned to the task and no longer needs to execute
it. The former agent therefore sends the token to the latter agent so that the
latter agent can try to assign itself to a new task.

\begin{algorithm}[t]
\scriptsize
\renewcommand\arraystretch{0.5}
\caption{Token Passing with Task Swaps (TPTS)}
\label{alg:TPTS}
/* system executes now */\;
Initialize \emph{token} with the (trivial) path $[loc(a_i)]$ for each agent $a_i$\;
\While{true}
{
    Add all new tasks, if any, to the task set $\mathcal{T}$\;
    \While{\textnormal{agent $a_i$ exists that requests \emph{token}}}
    {
      /* system sends \emph{token} to $a_i$ - $a_i$ executes now */\;
      GetTask($a_i$, \emph{token})\;
      /* $a_i$ returns \emph{token} to system - system executes now */\;
    }
    All agents move along their paths in \emph{token} for one timestep and
    remove tasks from $\mathcal{T}$ when they start to execute them\;
    /* system advances to the next timestep */\;
}
\Fn{\textnormal{GetTask($a_i$, \emph{token})}}
{
  $\mathcal{T'} \gets \{\tau_j \in \mathcal{T}| $no other path in \emph{token} ends in $s_j$ or $g_j$$\}$\;
    \While{$\mathcal{T'}\neq \emptyset$}
    {
      $\tau \gets \arg \min_{\tau_j \in \mathcal{T'}} h(loc(a_i), s_j)$\;
      Remove $\tau$ from $\mathcal{T'}$\;
      \If{\textnormal{no agent is assigned to $\tau$}}
        {
          Assign $a_i$ to $\tau$\;
          Update $a_i$'s path in \emph{token} with Path1($a_i$, $\tau$, \emph{token})\;
          \textbf{return} \emph{true}\;
        }
        \Else
        {
          Remember \emph{token}, task set, and agent assignments\;
          $a_{i'} \gets$ agent that is assigned to $\tau$\;
          Unassign $a_{i'}$ from $\tau$ and assign $a_i$ to $\tau$\;
          Remove $a_{i'}$'s path from \emph{token}\;
          Path1($a_i$, $\tau$, \emph{token})\;
          Compare when $a_i$ reaches $s_j$ on its path in \emph{token} to when
          $a_{i'}$ reaches $s_j$ on its path in \emph{token}'\;
          \If{$a_i$ \textnormal{reaches} $s_j$ \textnormal{earlier than} $a_{i'}$}
             {
               /* $a_i$ sends \emph{token} to $a_{i'}$ - $a_{i'}$ executes now */\;
               \emph{success} $\gets$ GetTask($a_i'$, \emph{token})\;
               /* $a_{i'}$ returns \emph{token} to $a_i$ - $a_i$ executes now */\;
               \If{success}
                  {
                    \textbf{return} \emph{true}\;
                  }
             }
             Restore \emph{token}, task set, and agent assignments\;
        }
    }
    \If{$loc(a_i)$ \textnormal{is not an endpoint}} {
        Update $a_i$'s path in \emph{token} with Path2($a_i$, \emph{token})\;
        \If{\textnormal{path was found}} {
          \textbf{return} \emph{true}\;
        }
    }
    \Else {
      \If{no task $\tau_j \in \mathcal{T}$ exists with $g_j = loc(a_i)$}
         {
           Update $a_i$'s path in \emph{token} with the path $[loc(a_i)]$\;
         }
         \Else
             {
               Update $a_i$'s path in \emph{token} with Path2($a_i$, \emph{token})\;
             }
             \textbf{return} \emph{true}\;
    }
    \textbf{return} \emph{false}\;
}
\end{algorithm}

Algorithm~\ref{alg:TPTS} shows the pseudo-code of TPTS. It uses the same main
loop [Lines 3-10] and the same functions Path1 and Path2 as TP. Agent $a_i$
with the token executes function GetTask [Line 7], where it tries to assign
itself to a task in the task set $\mathcal{T}$ and find a path to an endpoint. The
call of function GetTask returns success (true) if agent $a_i$ finds a path to
an endpoint and failure (false) otherwise.

When executing function GetTask, agent $a_i$ considers all tasks $\tau$ from
the task set such that no path of other agents in the token ends in the pickup
or delivery location of the task [Line 12], one after the other in order of
increasing h-values from its current location to the pickup locations of the
tasks [Lines 13-15]. If no agent is assigned to the task, then (as in TP)
agent $a_i$ assigns itself to the task, updates its path in the token with
function Path1 and returns success [Lines 17-19].  Otherwise, agent $a_i$
unassigns the agent $a_{i'}$ assigned to the task and assigns itself to the
task [Line 23]. It removes the path of agent $a_{i'}$ from the token and
updates its own path in the token with function Path1 [Lines 24-25]. If agent
$a_i$ reaches the pickup location of the task with fewer timesteps than agent
$a_{i'}$, then it sends the token to agent $a_{i'}$, which executes function
GetTask to try to assign itself to a new task and eventually returns the token
to agent $a_i$ [Lines 28-30]. If agent $a_{i'}$ returns success, then agent
$a_i$ returns success as well [Lines 31-32]. In all other cases, agent $a_i$
reverses all changes to the paths in the token, task set, and agent
assignments and then considers the next task $\tau$ [Lines 33].

Once agent $a_i$ has considered all tasks $\tau$ unsuccessfully, then it does
not assign itself to a task in the current timestep. If it is not in an
endpoint (which can happen only during a call of Function GetTask on Line 29),
then it updates its path in the token with function Path2 to move to an
endpoint [Line 35]. The call can fail since agent $a_i$ is not at an endpoint.
Agent $a_i$ returns success or failure depending on whether it was able to
find a path [Lines 37 and 44]. Otherwise, (as in TP) if the agent is not in
the delivery location of a task in the task set, then it updates its path in
the token with the trivial path where it rests in its current location [Line
  40]. Otherwise, to avoid deadlocks, it updates its path in the token with
Function Path2 [Line 42]. In both cases, it returns success [Line 43].

Finally, the agent returns the token to the system and moves along its path in
the token, removing the task that it is assigned to (if any) from the task set
once it reaches the pickup location of the task (and thus starts to execute
it) [Lines 8-9].

\begin{pro}\label{p3}
  Function GetTask returns successfully for well-formed MAPD instances when
  called on Line 7.
\end{pro}

\begin{proof}
  Function GetTask returns in finite time (and the number of times an agent
  can unassign another agent from any task is bounded during its execution)
  because a) the number of tasks in the task set is finite; b) an agent can
  unassign another agent from a task only if it reaches the pickup location of
  the task with fewer timesteps than the other agent; c) a task that has some
  agent assigned always continues to have some agent assigned until it has
  been executed; and d) the functions Path1 and Path2 return in finite
  time. We show that an agent that executes function GetTask on Line 7 finds a path to an
  endpoint. The agent is always able to find paths with functions Path1 and
  Path 2 on all lines but Line 35 because it is then at an endpoint and thus
  has to find a path from an endpoint to an endpoint. The proofs are similar
  to those of Properties \ref{p1} and \ref{p2}. However, the agent is not
  guaranteed to find a path with function Path2 on Line 35 because it is then
  not at an endpoint and thus has to find a path from a non-endpoint to an
  endpoint.  Since the agent is at an endpoint during the call of function
  GetTask on Line 7, it does not execute Line 35, finds a path, and returns
  success.
\end{proof}

\begin{thm}\label{thm2}
TPTS solves all well-formed MAPD instances.
\end{thm}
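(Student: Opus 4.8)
The plan is to mirror the structure of the proof of Theorem~\ref{thm1}, showing that every task is eventually assigned to some agent and then executed. The overall argument rests on two pillars: (1) the termination and success of the per-timestep token-passing procedure, which is exactly Property~\ref{p3}; and (2) a liveness argument that no task can remain in the task set forever.

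**First I would** invoke Property~\ref{p3} to establish that, in each timestep, every agent $a_i$ that requests the token and executes \textsc{GetTask} on Line~7 terminates in finite time and returns success, i.e., finds a valid non-colliding path ending at an endpoint. This guarantees that the inner \texttt{while} loop of the main routine [Lines 5--9] completes in finite time each timestep, so the simulation is well-defined and advances timestep by timestep without getting stuck. In particular, the recursive task-swap chain cannot loop forever, since Property~\ref{p3} already bounds the number of unassignments during a single call.

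**Next I would** argue liveness, adapting the reasoning from Theorem~\ref{thm1}. The key observation is that in TPTS the task set contains all \emph{unexecuted} tasks, and a task leaves the task set only when some agent reaches its pickup location [Lines 8--9]. I would argue that as long as the task set is nonempty, the set $\mathcal{T}'$ computed on Line~12 eventually becomes nonempty for some requesting agent: since agents never rest in the delivery location of a pending task (Lines 40--42 force a move via Path2 in that case, exactly as in TP), no task's pickup or delivery endpoint stays permanently blocked by a resting agent. Once $\mathcal{T}' \neq \emptyset$ for the agent holding the token, \textsc{GetTask} assigns that agent (or, through the swap mechanism, some agent) to a task and commits a Path1 route to its pickup location. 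Crucially, once a task has an agent assigned, clause (c) of Property~\ref{p3} guarantees it retains an assigned agent until executed, and swaps only ever replace the assignee with one reaching the pickup location strictly sooner. Hence the assigned agent's remaining distance to the pickup location is monotonically non-increasing across timesteps and strictly decreases as it moves, so the agent reaches the pickup location in a bounded number of timesteps, executing the task and bounding its service time.

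**The main obstacle** I anticipate is ruling out the pathological scenario where assignments and swaps churn indefinitely \emph{across} timesteps without any task ever being picked up — a subtlety absent from TP, where assignments are comparatively stable. The delicate point is that Property~\ref{p3} bounds swaps only \emph{within} a single \textsc{GetTask} call; I must additionally argue that, measured across timesteps, progress is genuinely made. I would resolve this by a potential-function argument: the total remaining distance of assigned agents to their pickup locations cannot increase under a swap (a swap is accepted only when the new agent is strictly closer in timesteps), and it strictly decreases whenever an assigned agent advances toward its pickup location, so no task's pickup can be deferred forever. Combining the finite number of tasks present at any time with the bounded arrival rate guaranteed by clause~(a) of Definition~\ref{defn:well-formed}, every task is executed within a bounded number of timesteps, giving bounded service time and thus solving the instance.
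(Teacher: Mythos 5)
Your proposal takes essentially the same approach as the paper: the paper's entire proof of Theorem~\ref{thm2} is that it is ``similar to the one of Theorem~\ref{thm1} but uses Property~\ref{p3}'', which is precisely your skeleton (termination and success of GetTask via Property~\ref{p3}, then the liveness argument that no agent rests at a pending delivery location, so tasks are eventually assigned and executed). Your added potential-function argument against cross-timestep swap churn --- the arrival timestep at a task's pickup location strictly decreases with each accepted swap and is otherwise preserved as the assigned agent moves, while clause (c) of Property~\ref{p3} keeps the task assigned --- correctly makes explicit a detail the paper leaves implicit; just note that clause a) of Definition~\ref{defn:well-formed} asserts finiteness of the number of tasks, not a bounded arrival rate, which is all your final step actually needs.
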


\begin{proof}
  The proof is similar to the one of Theorem \ref{thm1} but uses Property
  \ref{p3}.
\end{proof}

\begin{figure}
  \centering
  \includegraphics[width=0.2\columnwidth]{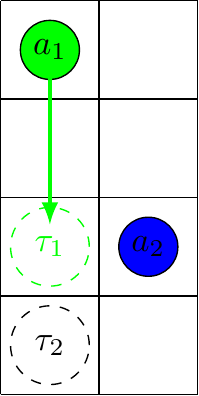}~~
  \includegraphics[width=0.2\columnwidth]{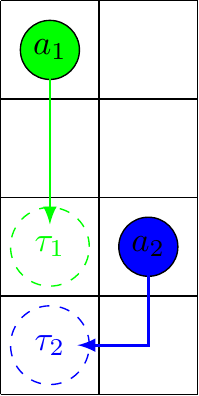}~~
  \includegraphics[width=0.2\columnwidth]{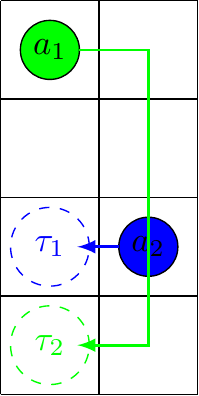}
  \caption{The figure shows a MAPF instance. The pickup location is the same
    as the delivery location for both two tasks. Blue and green circles
    are the initial locations of agents. Dashed circles are the
    pickup/delivery locations.}
  \label{fig:TPTS}
\end{figure}

TPTS is often more effective than TP but Figure~\ref{fig:TPTS} shows that this
is not guaranteed. The figure shows a MAPD instance with two agents $a_1$ and
$a_2$ and two tasks $\tau_1$ and $\tau_2$. The pickup location is the same as
the delivery location for each task. Assume that both $a_1$ and $a_2$ request
the token and the system sends it to $a_1$ first. $a_1$ assigns itself to
$\tau_1$. Figure~\ref{fig:TPTS} (left) shows the path of $a_1$. The system
then sends the token to $a_2$ next. In TP, $a_2$ assigns itself to $\tau_2$.
Figure \ref{fig:TPTS} (center) shows the paths of $a_1$ and $a_2$. The
resulting service time is two. In TPTS, however, $a_2$ assigns itself to
$\tau_1$ because it can reach the pickup location of $\tau_1$ with fewer
timesteps than $a_1$. In return, $a_1$ assigns itself to $\tau_2$. Figure
\ref{fig:TPTS} (right) shows the paths of $a_1$ and $a_2$. The service time is
three (the average of five and one).

\section{Centralized Algorithm}

In this section, we develop a centralized strawman MAPD algorithm, CENTRAL, to
evaluate how effective our decoupled MAPD algorithms are. We want CENTRAL to
be reasonably efficient and effective but do not require that it is optimally
effective or even solves all well-formed MAPD instances. The agents of a
centralized MAPD algorithm can communicate more than the ones of TPTS, and the
ones of TPTS communicate more than the ones of TP. Thus, we expect the MAPD
algorithms to be in increasing order of effectiveness: TP, TPTS, and
CENTRAL. For the same reason, we expect the MAPD algorithms to be in
increasing order of efficiency: CENTRAL, TPTS, and TP.

In each timestep, CENTRAL first assigns endpoints to all agents and then
solves the resulting MAPF instance to plan paths for all agents from their
current locations to their assigned endpoints simultaneously. Finally, all
agents move along their paths for one timestep and the procedure repeats.

\noindent \textbf{Agent Assignment} First, CENTRAL considers each agent, one
after the other, that rests at the pickup location of an unexecuted task. If the
delivery location of the task is currently not assigned to other agents,
CENTRAL assigns the agent to the corresponding unexecuted task (if it is not
assigned to the task already) and assigns the delivery location of the task to
the agent. The agent then starts to execute the task and thus becomes
occupied. Then, CENTRAL assigns each free agent either the pickup location of
an unexecuted task or some other endpoint as parking location. To make the
resulting MAPF problem solvable, the endpoints assigned to all agents must be
pairwise different. Agents are assigned pickup locations of unexecuted tasks
in order to execute the tasks afterward. Thus, when CENTRAL assigns pickup
locations of unexecuted tasks to agents, we want the delivery locations of
these tasks to be different from the endpoints assigned to all agents (except
for their own pickup locations) and from each other. CENTRAL achieves these
constraints as follows:

First, CENTRAL greedily constructs a set of possible endpoints $X$ for the
free agents as follows: CENTRAL greedily constructs a subset ${\mathcal T}'$
of unexecuted tasks, starting with the empty set, by checking for each
unexecuted task, one after the other, whether its pickup and delivery
locations are different from the delivery locations of all executed tasks and
the pickup and delivery locations of all unexecuted tasks already added to
${\mathcal T}'$ and, if so, adds it to ${\mathcal T}'$. CENTRAL then sets $X$
to the pickup locations of all tasks in ${\mathcal T}'$. If the number of free
agents is larger than $|X|$, then CENTRAL needs to add endpoints to $X$ as
parking locations for some free agents.  Since it is not known a priori which
free agents these parking locations will be assigned to, there should be one good
parking location available for each free agent, which is possible due to
Definition~\ref{defn:well-formed}. CENTRAL thus greedily determines a good
parking location for each free agent $a_i$, one after the other, as the endpoint
$e$ that minimizes the cost $c(a_i,e)$ (``is closest to the agent'') among all
endpoints that are different from the delivery locations of all executed
tasks, the pickup and delivery locations of all tasks in ${\mathcal T}'$, and
the parking locations already determined, where $c(a_i,e)$ is the cost of a
cost-minimal path that moves from the current location of free agent $a_i$ to
endpoint $e$. It then adds this endpoint to $X$.

Second, CENTRAL assigns each free agent an endpoint in $X$ to satisfy all
constraints. It uses the Hungarian Method \cite{Kuhn1955} for this purpose
with the modified costs $c'(a_i,e)$ for each pair of free agent $a_i$ and
endpoint $e$, where $c$ is the number of free agents, $C$ is a sufficiently
large constant (for example, the maximum over all costs $c(a_i,e)$ plus one),
and $c'(a_i,e) = c \cdot C \cdot c(a_i,e)$ if $e$ is a pickup location of a
task in ${\mathcal T}'$ and $c'(a_i,e) = c \cdot C^2 + c(a_i,e)$ if $e$ is a
parking location. The modified costs have two desirable properties: a) The
modified cost of assigning a pickup location to a free agent is always smaller
than the modified cost of assigning a parking location to the same agent.
Therefore, assigning pickup locations is more important than assigning rest
locations. b) Assigning a closer pickup location to a single free agent that
is assigned a pickup location reduces the total modified cost more than
assigning closer parking locations to all free agents that are assigned rest
locations.  Therefore, assigning closer pickup locations is more important
than assigning closer parking locations.

\noindent \textbf{Path Planning} CENTRAL uses the optimally effective MAPF
algorithm Conflict-Based Search \cite{DBLP:journals/ai/SharonSFS15} to plan
collision-free paths for all agents from their current locations to their
assigned endpoints simultaneously. These paths minimize the sum of the number
of timesteps required by all agents to reach their assigned endpoints and stop
moving. We noticed that CENTRAL becomes significantly more efficient if it
plans paths in two stages: \hang{First, it plans paths for all agents that
  become occupied in the current timestep to their assigned endpoints (using
  the approach described above but treating the most recently calculated paths
  of all other agents as spatio-temporal obstacles.}
%First, it plans paths for all occupied agents to their assigned
%endpoints (using the approach described above but treating the paths of all
%other agents from the previous timestep as spatio-temporal obstacles, under
%the assumption that all agents rest forever in their endpoints).
Then, it plans paths for all free agents to their assigned endpoints (again
using the approach described above but treating the most recently calculated
paths of all other agents as spatio-temporal obstacles). In general, two
smaller MAPF instances can be solved much faster than their union due to the
NP-hardness of the problem. Also, CENTRAL can then determine a more informed
cost $c(a_i,e)$ as the cost of a cost-minimal path that a) moves from the
current location of agent $a_i$ to endpoint $e$ and b) does not collide with
the paths of the occupied agents (as described for TP).

%\begin{pro}
%Path planning for all occupied agents returns paths successfully for
%well-formed MAPD instances.
%\end{pro}

\hang{
\begin{pro}
Path planning for all agents that became occupied in the current timestep
returns paths successfully for well-formed MAPD instances.
\end{pro}
}

\begin{proof}
\hang{We construct paths for all agents that became occupied in the current
  timestep} from their current locations to their assigned endpoints that do
not collide with the most recently calculated paths of all other agents:
Assume that all other agents move along their most recently calculated
paths. When all of them have reached the ends of their paths, move all agents
that became occupied one after the other to their assigned endpoints, which is
possible due to Definition~\ref{defn:well-formed} since their current
locations are endpoints and their assigned endpoints are different from the
endpoints that all other agents now occupy.
\end{proof}

\begin{pro}
Path planning for all free agents returns paths successfully for well-formed
MAPD instances.
\end{pro}

\begin{proof}
We construct paths for all free agents from their current locations to their
assigned endpoints that do not collide with the most recently calculated paths
of all other agents: Assume that all agents move along their most recently
calculated paths. When all of them have reached the ends of their paths, move
all free agents one after the other to their assigned endpoints, which is
possible due to Definition~\ref{defn:well-formed} since the locations that they
now occupy are endpoints. Directly before an agent moves to its assigned
endpoint, check whether this endpoint is blocked by another agent. If so, move
this other agent to an unoccupied endpoint first. Such an endpoint exists
since there are at least $m+1$ endpoints for $m$ agents due to
Definition~\ref{defn:well-formed}.
\end{proof}

\section{Experimental Evaluation}

\begin{figure}[t]
  \centering
  \includegraphics[width=0.75\columnwidth]{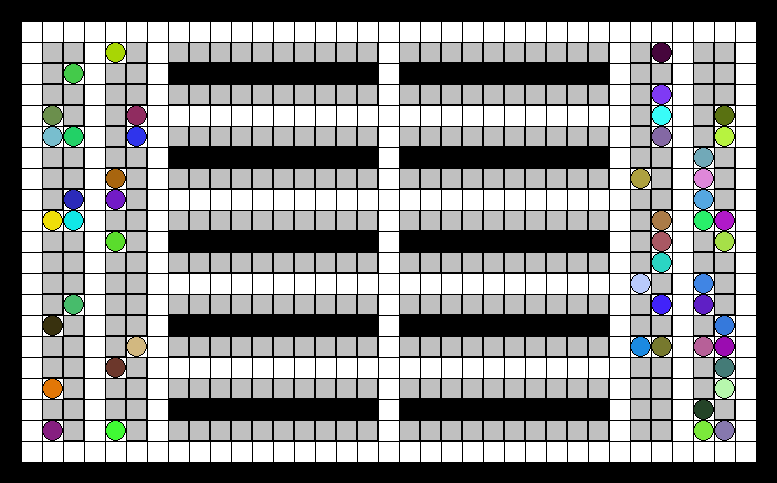}
  \caption{The figure shows a $21\times 35$ 4-neighbor grid that represents
    the layout of a small simulated warehouse environment with 50 agents. Black
    cells are blocked. Gray cells are task endpoints. Colored circles are the
    initial locations of agents.}
  \label{fig:map0}
\end{figure}

In this section, we describe our experimental results on a 2.50 GHz Intel Core
i5-2450M laptop with 6 GB RAM. We ran TP, TPTS, and CENTRAL in the small
simulated warehouse environment shown in Figure~\ref{fig:map0}. We generated a
sequence of 500 delivery tasks by randomly choosing their pickup and delivery
locations from all task endpoints. The initial locations of the agents are the
only non-task endpoints. We used 6 different task frequencies (numbers of
tasks that are added (in order) from the sequence to the task set in each
timestep): 0.2 (one task every 5 timesteps), 0.5, 1, 2, 5, and 10. For each
task frequency, we used 5 different numbers of agents: 10, 20, 30, 40, and 50.
Table~\ref{tab:1} reports the makespans, the service times, the runtimes per
timestep (in ms), the ratios of the service times of TPTS and TP, and the
ratios of the service times of CENTRAL and TP. The measures for a task
frequency of 10 tasks per timestep are reasonably representative of the case
where all tasks are added in the beginning of the operation since the tasks
are added over the first 50 timesteps only.

\begin{table}[t]
\Huge
\centering
\caption{The table shows the experimental results for TP, TPTS, and CENTRAL
  in the small simulated warehouse environment.}
\label{tab:1}
\resizebox{\columnwidth}{!}{%
\begin{tabular}{cc|rrr|rrrr|rrrr}\hline
                                                               & \multicolumn{1}{c|}{}     & \multicolumn{3}{c|}{TP}                                                                                                                                                                 & \multicolumn{4}{c|}{TPTS}                                                                                                                                                                                                                                              & \multicolumn{4}{c}{CENTRAL}                                                                                                                                                                                                                                           \\
                                                               \hline
\begin{tabular}[c]{@{}c@{}}task fre-\\quency\end{tabular} & \multicolumn{1}{c|}{\begin{tabular}[c]{@{}c@{}}agents\end{tabular}} & \multicolumn{1}{c}{\begin{tabular}[c]{@{}c@{}}make-\\span\end{tabular}} & \multicolumn{1}{c}{\begin{tabular}[c]{@{}c@{}}service\\time\end{tabular}} & \multicolumn{1}{c|}{\begin{tabular}[c]{@{}c@{}}run-\\time\end{tabular}} & \multicolumn{1}{c}{\begin{tabular}[c]{@{}c@{}}make-\\span\end{tabular}} & \multicolumn{1}{c}{\begin{tabular}[c]{@{}c@{}}service\\time\end{tabular}} & \multicolumn{1}{c}{\begin{tabular}[c]{@{}c@{}}run-\\time\end{tabular}} & \multicolumn{1}{c|}{ratio} & \multicolumn{1}{c}{\begin{tabular}[c]{@{}c@{}}make-\\span\end{tabular}} & \multicolumn{1}{c}{\begin{tabular}[c]{@{}c@{}}service\\time\end{tabular}} & \multicolumn{1}{c}{\begin{tabular}[c]{@{}c@{}}run-\\time\end{tabular}} & \multicolumn{1}{c}{ratio} \\
\hline
\multirow{5}{*}{0.2}                                           & 10                       & 2,532                      & 38.54                                                                            & 0.13                                                                    & 2,532                      & 29.33                                                                            & 1.86                                                                    & 0.76                                                                         & 2,513                      & 27.78                                                                            & 92.69                                                                   & 0.72                                                                             \\
                                                               & 20                       & 2,540                      & 39.77                                                                            & 0.26                                                                    & 2,520                      & 25.36                                                                            & 9.82                                                                    & 0.64                                                                         & 2,513                      & 24.37                                                                            & 493.83                                                                  & 0.61                                                                             \\
                                                               & 30                       & 2,546                      & 38.71                                                                            & 0.25                                                                    & 2,527                      & 23.88                                                                            & 21.57                                                                   & 0.62                                                                         & 2,513                      & 23.10                                                                            & 1,225.62                                                                 & 0.60                                                                             \\
                                                               & 40                       & 2,540                      & 38.88                                                                            & 0.24                                                                    & 2,524                      & 23.50                                                                            & 27.49                                                                   & 0.60                                                                         & 2,511                      & 22.48                                                                            & 2,246.66                                                                 & 0.58                                                                             \\
                                                               & 50                       & 2,540                      & 40.03                                                                            & 0.32                                                                    & 2,524                      & 23.11                                                                            & 47.33                                                                   & 0.58                                                                         & 2,511                      & 21.82                                                                            & 3,426.01                                                                 & 0.54                                                                             \\
                                                               \hline
\multirow{5}{*}{0.5}                                           & 10                       & 1,309                      & 132.79                                                                           & 0.24                                                                    & 1,274                      & 131.15                                                                           & 0.33                                                                    & 0.99                                                                         & 1,242                      & 116.37                                                                           & 113.60                                                                  & 0.88                                                                             \\
                                                               & 20                       & 1,094                      & 42.69                                                                            & 1.16                                                                    & 1,038                      & 30.74                                                                            & 12.39                                                                   & 0.72                                                                         & 1,031                      & 28.05                                                                            & 309.01                                                                  & 0.66                                                                             \\
                                                               & 30                       & 1,069                      & 43.97                                                                            & 1.51                                                                    & 1,035                      & 27.14                                                                            & 34.04                                                                   & 0.62                                                                         & 1,034                      & 25.36                                                                            & 916.37                                                                  & 0.58                                                                             \\
                                                               & 40                       & 1,090                      & 43.01                                                                            & 0.70                                                                    & 1,038                      & 25.98                                                                            & 19.85                                                                   & 0.60                                                                         & 1,034                      & 24.26                                                                            & 1,697.99                                                                 & 0.56                                                                             \\
                                                               & 50                       & 1,083                      & 43.66                                                                            & 1.36                                                                    & 1,036                      & 25.22                                                                            & 71.48                                                                   & 0.58                                                                         & 1,031                      & 23.83                                                                            & 2,920.72                                                                 & 0.55                                                                             \\
                                                               \hline
\multirow{5}{*}{1}                                             & 10                       & 1,198                      & 311.78                                                                           & 0.20                                                                    & 1,182                      & 301.03                                                                           & 0.37                                                                    & 0.97                                                                         & 1,143                      & 285.67                                                                           & 141.32                                                                  & 0.92                                                                             \\
                                                               & 20                       & 757                       & 95.98                                                                            & 1.03                                                                    & 706                       & 88.25                                                                            & 2.80                                                                    & 0.92                                                                         & 673                       & 74.79                                                                            & 279.52                                                                  & 0.78                                                                             \\
                                                               & 30                       & 607                       & 53.80                                                                            & 2.81                                                                    & 561                       & 42.84                                                                            & 8.45                                                                    & 0.80                                                                         & 557                       & 30.26                                                                            & 446.38                                                                  & 0.56                                                                             \\
                                                               & 40                       & 624                       & 48.80                                                                            & 2.14                                                                    & 563                       & 31.99                                                                            & 39.54                                                                   & 0.66                                                                         & 556                       & 28.27                                                                            & 1,159.76                                                                 & 0.58                                                                             \\
                                                               & 50                       & 597                       & 49.14                                                                            & 3.76                                                                    & 554                       & 30.27                                                                            & 128.13                                                                  & 0.62                                                                         & 552                       & 26.55                                                                            & 2,197.82                                                                 & 0.54                                                                             \\
                                                               \hline
\multirow{5}{*}{2}                                             & 10                       & 1,167                      & 407.62                                                                           & 0.19                                                                    & 1,168                      & 407.24                                                                           & 0.37                                                                    & 1.00                                                                         & 1,121                      & 386.81                                                                           & 143.30                                                                  & 0.95                                                                             \\
                                                               & 20                       & 683                       & 190.76                                                                           & 0.96                                                                    & 667                       & 181.03                                                                           & 2.38                                                                    & 0.95                                                                         & 628                       & 163.79                                                                           & 406.88                                                                  & 0.86                                                                             \\
                                                               & 30                       & 529                       & 114.39                                                                           & 2.31                                                                    & 496                       & 102.69                                                                           & 8.39                                                                    & 0.90                                                                         & 466                       & 88.45                                                                            & 589.13                                                                  & 0.77                                                                             \\
                                                               & 40                       & 464                       & 95.32                                                                            & 3.43                                                                    & 425                       & 72.59                                                                            & 7.32                                                                    & 0.76                                                                         & 385                       & 58.12                                                                            & 837.21                                                                  & 0.61                                                                             \\
                                                               & 50                       & 432                       & 75.63                                                                            & 6.28                                                                    & 383                       & 58.06                                                                            & 126.47                                                                  & 0.77                                                                         & 320                       & 39.25                                                                            & 1,200.10                                                                 & 0.52                                                                             \\
                                                               \hline
\multirow{5}{*}{5}                                             & 10                       & 1,162                      & 473.78                                                                           & 0.20                                                                    & 1,165                      & 473.18                                                                           & 0.41                                                                    & 1.00                                                                         & 1,105                      & 452.50                                                                           & 126.19                                                                  & 0.96                                                                             \\
                                                               & 20                       & 655                       & 247.08                                                                           & 1.02                                                                    & 645                       & 238.02                                                                           & 1.68                                                                    & 0.96                                                                         & 594                       & 224.70                                                                           & 350.13                                                                  & 0.91                                                                             \\
                                                               & 30                       & 478                       & 170.78                                                                           & 2.22                                                                    & 474                       & 167.66                                                                           & 8.58                                                                    & 0.98                                                                         & 426                       & 147.03                                                                           & 595.04                                                                  & 0.86                                                                             \\
                                                               & 40                       & 418                       & 155.33                                                                           & 4.15                                                                    & 396                       & 131.36                                                                           & 12.31                                                                   & 0.85                                                                         & 334                       & 108.39                                                                           & 864.56                                                                  & 0.70                                                                             \\
                                                               & 50                       & 395                       & 124.59                                                                           & 5.92                                                                    & 343                       & 104.86                                                                           & 59.64                                                                   & 0.84                                                                         & 295                       & 86.22                                                                            & 1,388.30                                                                 & 0.69                                                                             \\
                                                               \hline
\multirow{5}{*}{10}                                            & 10                       & 1,163                      & 495.93                                                                           & 0.22                                                                    & 1,172                      & 505.26                                                                           & 0.40                                                                    & 1.02                                                                         & 1,090                      & 472.56                                                                           & 125.55                                                                  & 0.95                                                                             \\
                                                               & 20                       & 643                       & 275.24                                                                           & 1.09                                                                    & 645                       & 258.36                                                                           & 1.87                                                                    & 0.94                                                                         & 607                       & 248.74                                                                           & 379.53                                                                  & 0.90                                                                             \\
                                                               & 30                       & 526                       & 192.01                                                                           & 1.98                                                                    & 491                       & 198.30                                                                           & 10.82                                                                   & 1.03                                                                         & 414                       & 164.41                                                                           & 593.89                                                                  & 0.86                                                                             \\
                                                               & 40                       & 407                       & 154.63                                                                           & 1.65                                                                    & 389                       & 152.49                                                                           & 12.62                                                                   & 0.99                                                                         & 341                       & 128.29                                                                           & 899.81                                                                  & 0.83                                                                             \\
                                                               & 50                       & 333                       & 131.42                                                                           & 5.62                                                                    & 319                       & 126.96                                                                           & 25.32                                                                   & 0.97                                                                         & 277                       & 105.11                                                                           & 1,376.51                                                                 & 0.80\\
                                                               \hline
\end{tabular}%
}
\end{table}

\noindent\textbf{Makespans and Service Times} The MAPD algorithms in
increasing order of their makespans and service times tend to be: CENTRAL,
TPTS, and TP. For example, the service time of TPTS (and CENTRAL) is up to
about 42 percent (and 48 percent, respectively) smaller than the one
of TP for some experimental runs. The makespans tend to be large for low task
frequencies and small for high task frequencies because the number of tasks is
constant and thus more time steps are needed to add all tasks for low task
frequencies. On the other hand, the service times tend to be small for low
task frequencies and high for high task frequencies because the agents tend to
be able to attend to tasks fast if the number of tasks in the system is
small. The makespans and service times tend to be large for small numbers of
agents and small for large numbers of agents because the agents tend to be
able to attend to tasks fast if the number of agents is large (although
congestion increases). The makespans and service times for a task frequency of
0.2 tasks per timestep are about the same for all numbers of agents because 10
agents already attend to all tasks as fast as the MAPD algorithms allow. The
makespans are similar for all MAPD algorithms and all numbers of agents for
the task frequency of 0.2 tasks per timestep because 10 agents already execute
tasks faster than they are added. The makespans then depend largely on how
fast the agents execute the last few tasks. On the other hand, the makespans
of MAPD algorithms increase substantially when tasks pile up because the
agents execute them more slowly than they are added. This allows us to
estimate the smallest number of agents needed for a lifelong operation as a
function of the task frequency and MAPD algorithm. For example, the makespan
of TPTS increases substantially when the number of agents are reduced from 20
to 10 for a task frequency of 1 task per timestep. Thus, one needs between about 10
and 20 agents for a lifelong operation with TPTS.

\noindent\textbf{Runtimes per Timestep} The MAPF algorithms in increasing
order of their runtimes per timestep tend to be: TP, TPTS, and CENTRAL. For
example, the runtime of TPTS (and CENTRAL) is two orders of magnitude larger
than the runtime of TP (and TPTS, respectively) for some experimental
runs. The runtimes of TP are less than 10 milliseconds, the runtimes of TPTS
are less than 200 milliseconds, and the runtimes of CENTRAL are less than
4,000 milliseconds in all experimental runs. We consider runtimes below one
second to allow for real-time lifelong operation. The runtimes tend not to be
correlated with the task frequencies.  They tend to be small for small numbers
of agents and large for large numbers of agents because all agents need to
perform computations, which are not run in parallel in our experiments.

\begin{figure*}[t!p]
                 \includegraphics[width=0.33\textwidth]{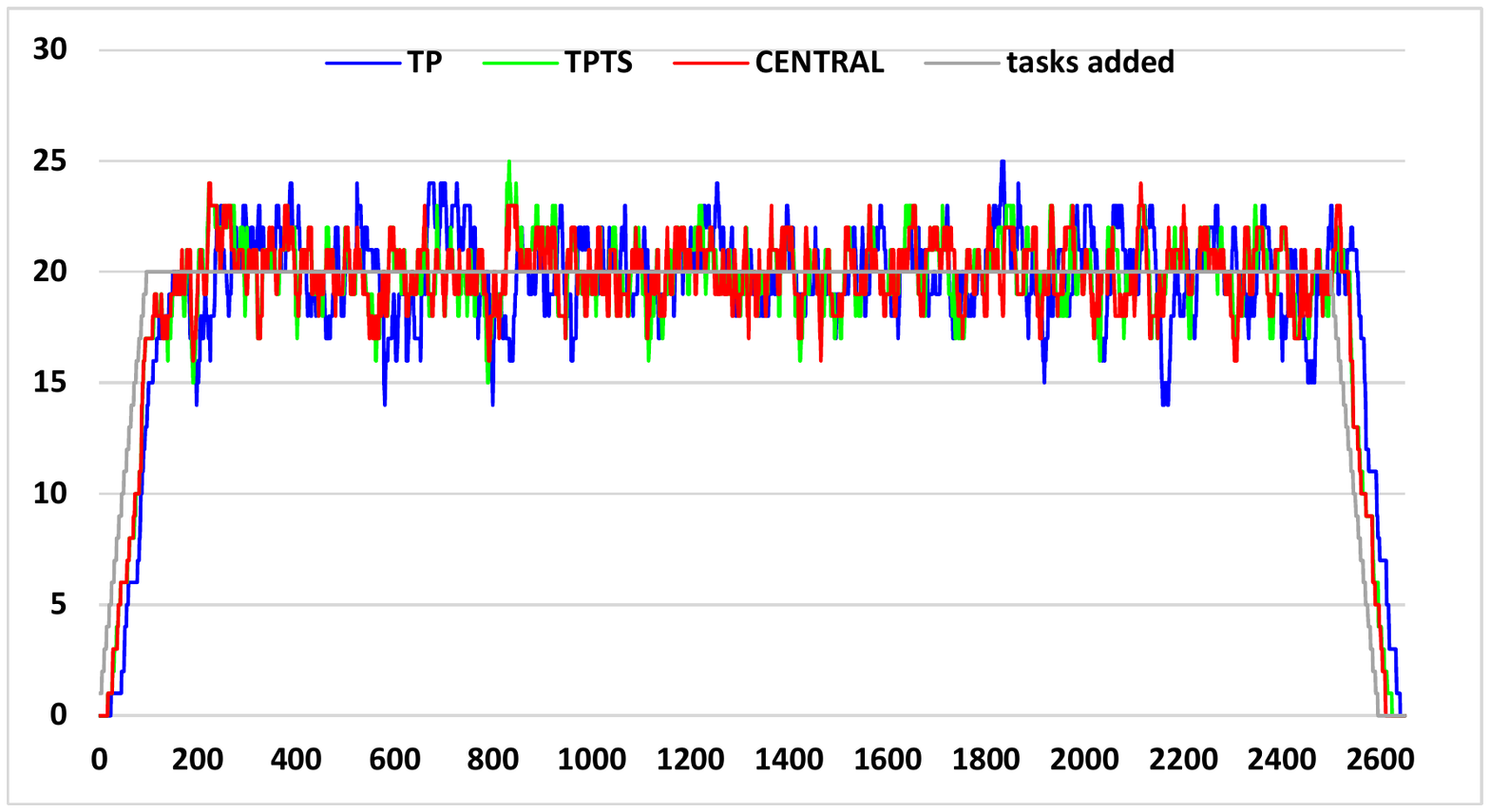}
                 \includegraphics[width=0.33\textwidth]{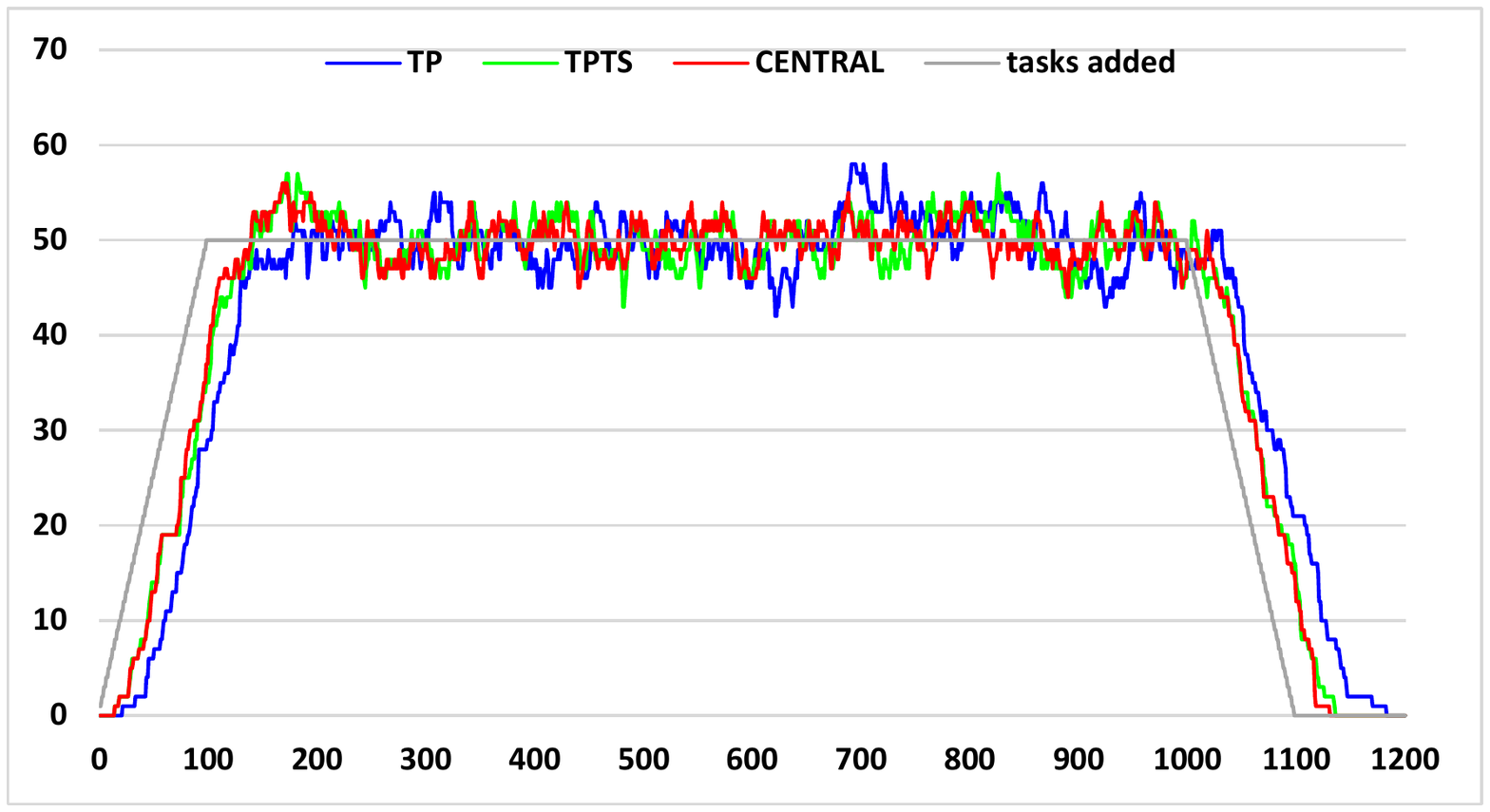}
                 \includegraphics[width=0.33\textwidth]{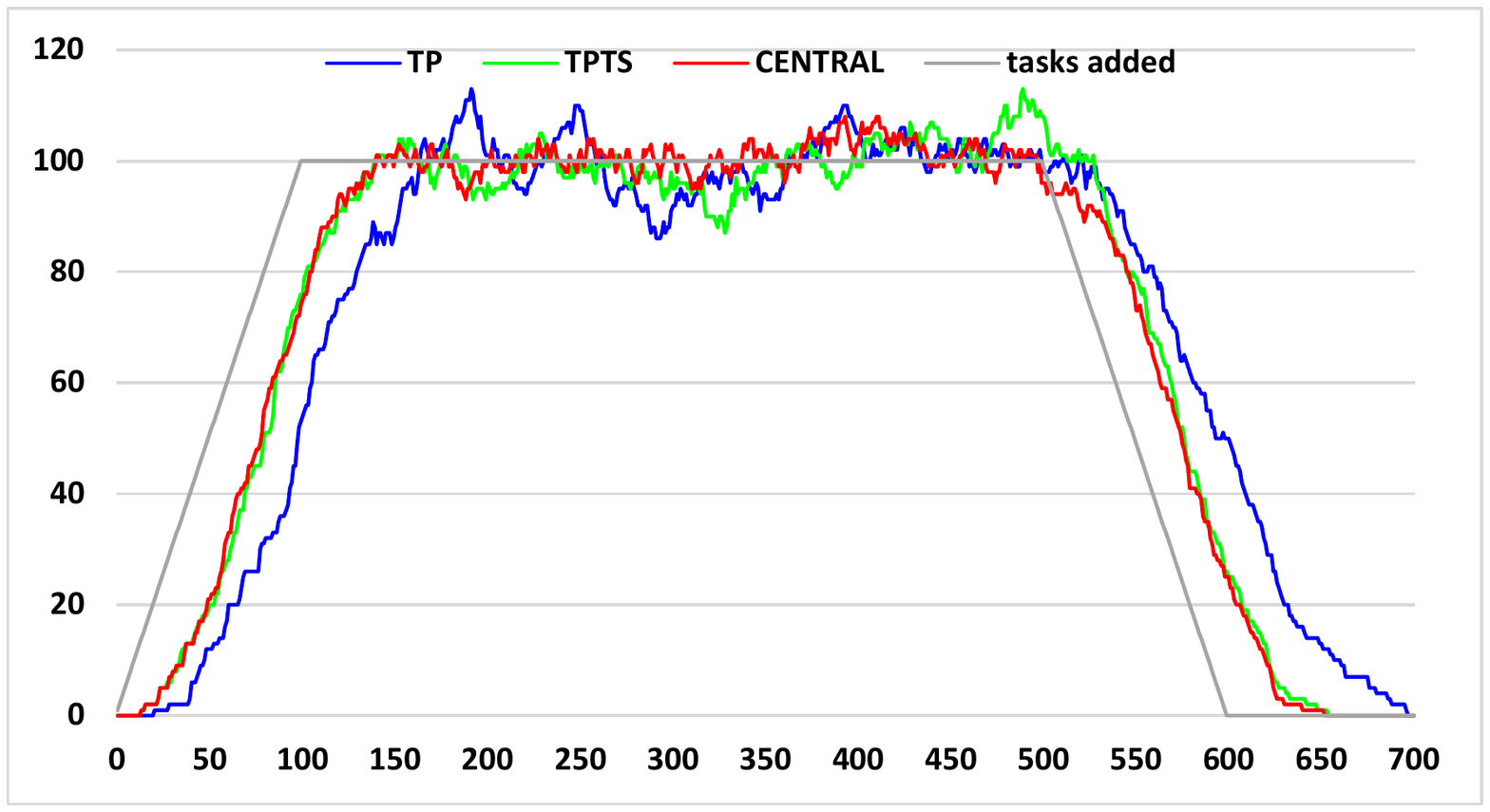}
             \includegraphics[width=0.33\textwidth]{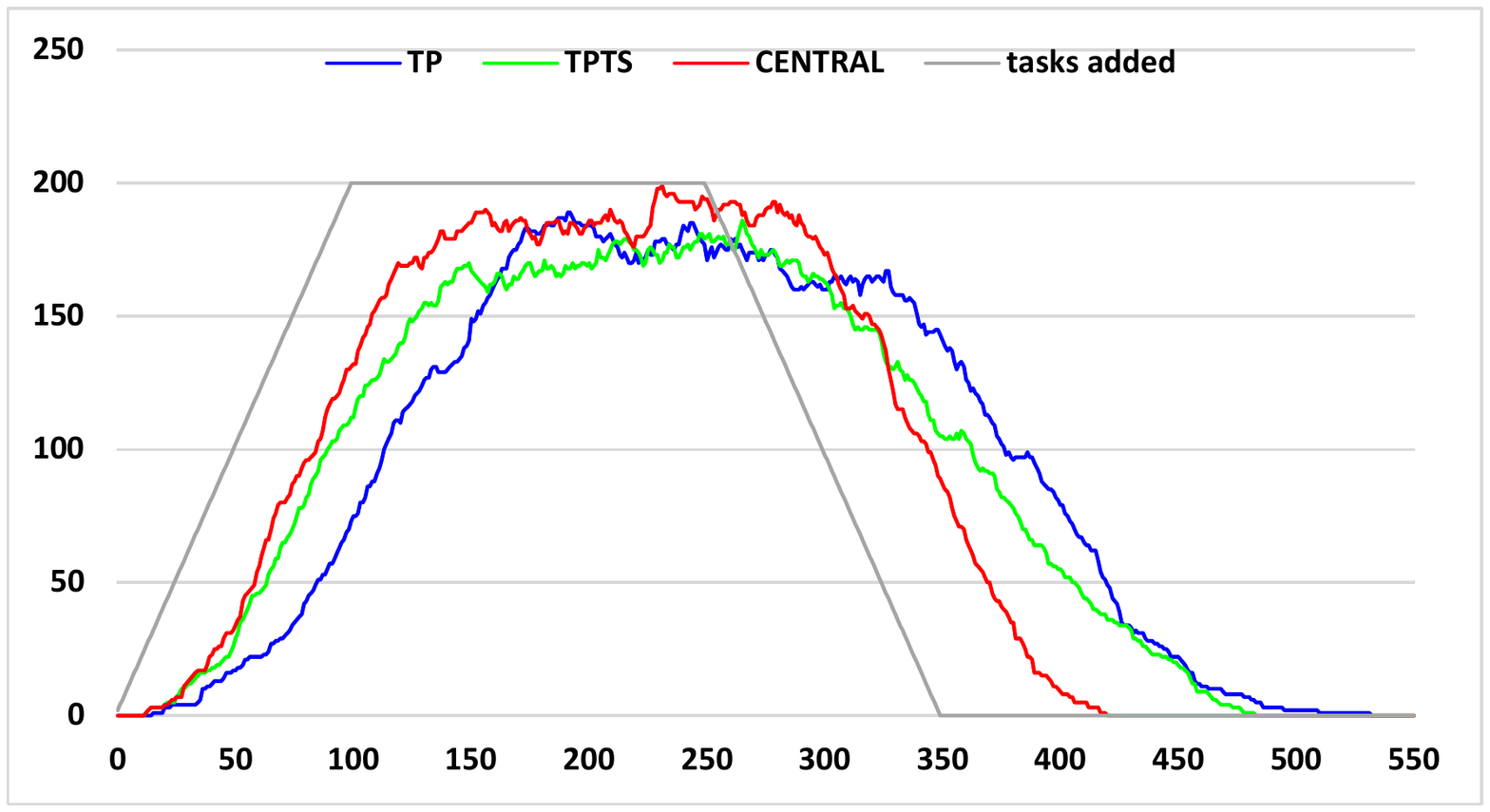}
             \includegraphics[width=0.33\textwidth]{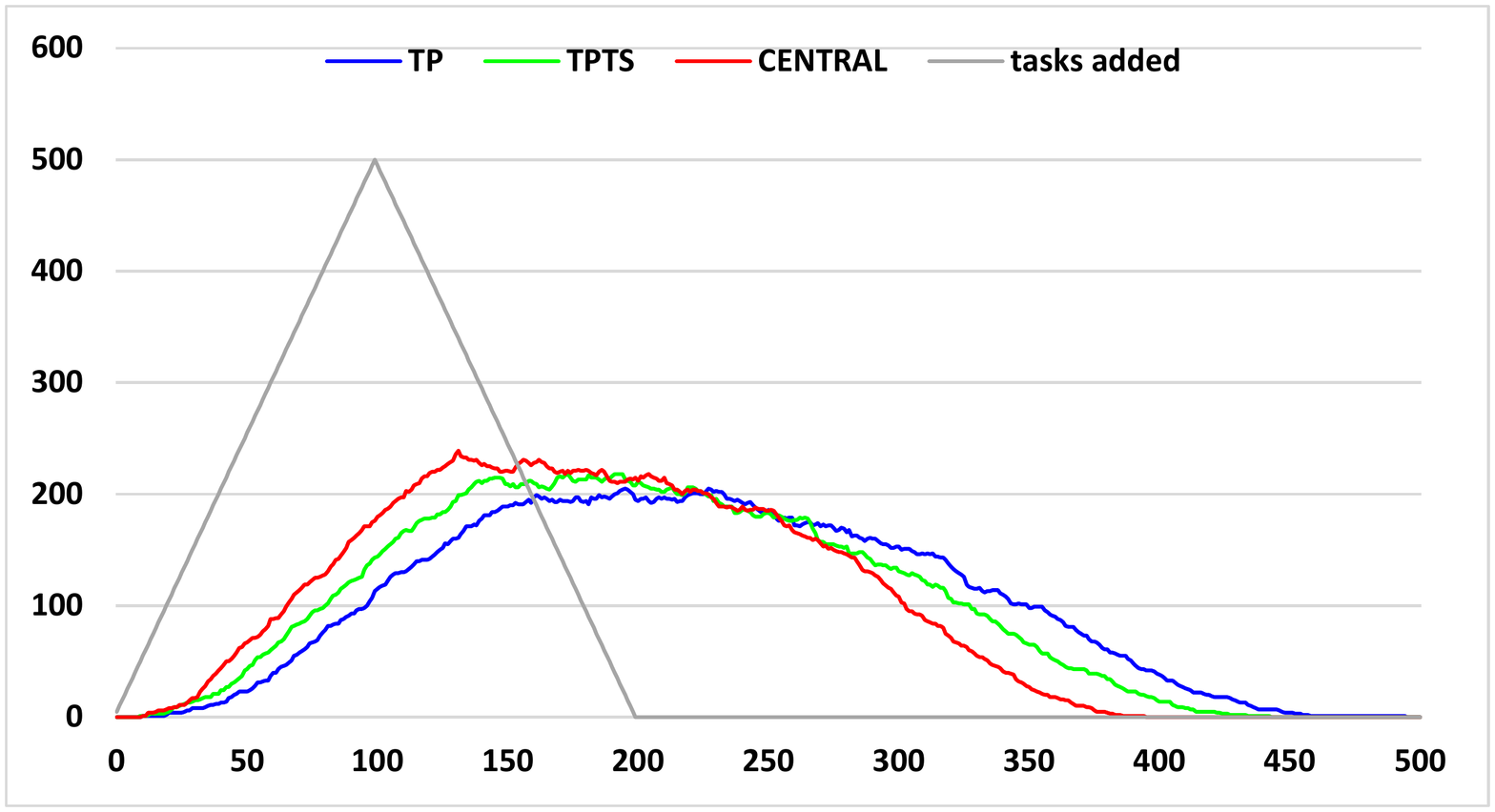}
             \includegraphics[width=0.33\textwidth]{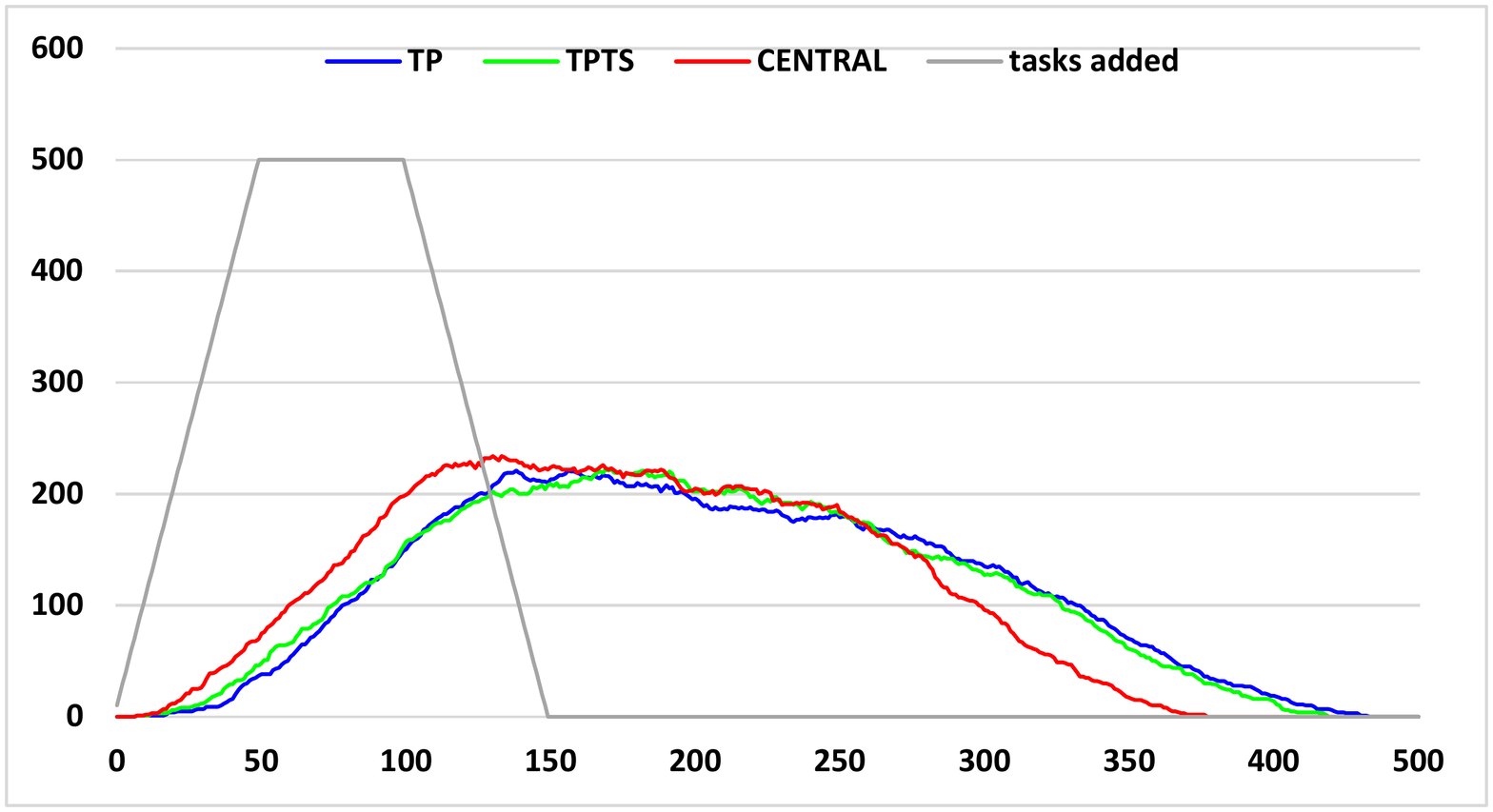}
             \caption{The charts show the number of tasks added (grey) and
               executed by 50 agents during the 100-timestep window $[t-99,t]$
               for TP, TPTS, and CENTRAL as a function of the timestep $t$ for
               task frequencies 0.2, 0.5, 1, 2, 5, and 10 (from left to right
               and top to bottom)} \label{fig:exp}
\end{figure*}

\noindent\textbf{Number of Executed Tasks} The service times vary over time
since only very few tasks are available in the first and last time steps. The
steady state is in between these two extremes. Figure~\ref{fig:exp} therefore
visualizes the number of tasks added and executed by 50 agents during the
100-timestep window $[t-99,t]$ for all MAPD algorithms as a function of the
timestep $t$. For low task frequencies, the numbers of tasks added match the
numbers of tasks executed closely for all MAPD algorithms. Differences between
them arise for higher task frequencies. For example, for the task frequency of
2 tasks per timestep, the number of tasks executed by CENTRAL increases faster
and reaches a higher level than the numbers of tasks executed by TP and
TPTS. The 100-timestep window $[150,249]$ at time step $t=249$ is a close
approximation of the steady state since all tasks are added at a steady rate
during the first 250 timesteps. CENTRAL executes more tasks during this
100-timestep window than TP and TPTS and thus has a smaller service
time. However, the numbers of tasks executed are smaller for all MAPD
algorithms than the number of tasks added, and tasks thus pile up for all of
them in the steady state.

\begin{figure}[t]
  \centering
  \includegraphics[width=0.75\columnwidth]{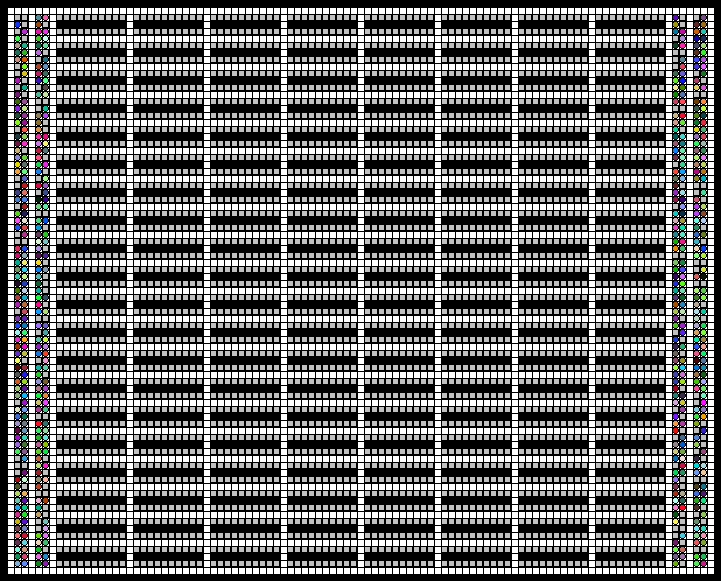}
  \caption{The figure shows a $81\times 81$ 4-neighbor grid that represents
    the layout of a large simulated warehouse environment with 500 agents.}
  \label{fig:map1}
\end{figure}

\begin{figure}[t]
  \centering
  \resizebox{0.75\columnwidth}{!}{%
  \Huge
  \begin{tabular}{c|rrrrr}
    \hline
    % after \\: \hline or \cline{col1-col2} \cline{col3-col4} ...
                 agents & \multicolumn{1}{c}{100} & \multicolumn{1}{c}{200} & \multicolumn{1}{c}{300} & \multicolumn{1}{c}{400} & \multicolumn{1}{c}{500} \\
    \hline
    service time   & 463.25&    330.19&	301.97&	289.08&	284.24 \\
    runtime        & 90.83&	538.22&	1,854.44&	3,881.11&	6,121.06 \\
    \hline
  \end{tabular}
  }
  \includegraphics[width=0.75\columnwidth]{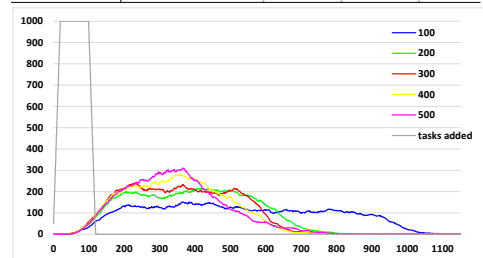}
  \caption{The figure shows the experimental results for TP
  in the large simulated warehouse environment.}
  \label{fig:exp2}
\end{figure}

\noindent\textbf{Scalability} To evaluate how the MAPD algorithms scale in the
size of the environment, we ran TP, TPTS, and CENTRAL in the large simulated
warehouse environment shown in Figure~\ref{fig:map1}. We generated a sequence
of 1,000 delivery tasks by randomly choosing their pickup and delivery
locations from all task endpoints. The initial locations of the agents are the
only non-task endpoints. We used a task frequency of 50 tasks per timestep and
100, 200, 300, 400, and 500 agents. TPTS and CENTRAL did not allow for
real-time lifelong operation for large numbers of agents.
Figure~\ref{fig:exp2} therefore reports, for TP only, the service times and
the runtimes per timestep (in ms) in the table as well as the numbers of tasks
added and executed during a 100-timestep window for different numbers of
agents in the charts.  The runtime of TP is smaller than 500 milliseconds for
200 agents, allowing for real-time lifelong operation.

\section{Conclusions}

In this paper, we studied a lifelong version of the multi-agent path finding
(MAPF) problem, called the multi-agent pickup and delivery (MAPD) problem, to
capture important characteristics of many real-world domains. In the MAPD
problem, agents have to attend to a stream of delivery tasks in an online
setting by first moving to the pickup locations of the tasks and then to the
delivery locations of the tasks while avoiding collisions with other
agents. We presented two decoupled MAPD algorithms, Token Passing (TP) and
Token Passing with Task Swaps (TPTS).  Theoretically, we showed that both MAPD
algorithms solve all well-formed MAPD instances. Experimentally, we compared
them against the centralized strawman MAPD algorithm CENTRAL without this
guarantee in a simulated warehouse system. The MAPD algorithms in increasing
order of their makespans and service times tend to be: CENTRAL, TPTS, and
TP. The MAPF algorithms in increasing order of their runtimes per timestep
tend to be: TP, TPTS, and CENTRAL. TP can easily be extended to a fully
distributed MAPD algorithm and is the best choice when real-time computation
is of primary concern since it remains efficient for MAPD instances with
hundreds of agents and tasks. TPTS requires limited communication among agents
and balances well between TP and CENTRAL.

%\newpage
\small
\bibliographystyle{abbrv}
\bibliography{references}

\end{document}